\documentclass{article}

\usepackage[preprint]{neurips_2025}

 \workshoptitle{PriGM}

\usepackage[utf8]{inputenc} 
\usepackage[T1]{fontenc}    
\usepackage{hyperref}       
\usepackage{url}            
\usepackage{booktabs}       
\usepackage{amsmath}        
\usepackage{amssymb}        
\usepackage{amsfonts}       
\usepackage{amsthm}         
\usepackage{nicefrac}       
\usepackage{microtype}      
\usepackage{xcolor}         
\usepackage{graphicx}
\usepackage{multirow}

\newtheorem{theorem}{Theorem}
\newtheorem{proposition}{Proposition}

\title{The Geometry of Language Model Logits: Generic Structure and Learned Departures}

\author{%
  Swathi Shree Narashiman$^{1,*}$,
  Aravind Ramana Venkatramanan$^{2, *}$,
  Nambi Seshadri$^{3}$ \\
  $^{1}$Stanford University,
  $^{2}$Indian Institute of Technology Madras,
  $^{3}$University of California San Diego (UCSD)
  \thanks{Equal Contribution; Correspondence: narashim@stanford.edu}
}

\begin{document}

\maketitle

\begin{abstract}
  Autoregressive language models produce, at every prediction step, a vocabulary-sized logit vector whose structure determines the conditional next-token distribution. We empirically identify that the ordered vocabulary-head logits follow a remarkably consistent log-linear relation
  \[
z_{(r)} \approx \mu - \alpha \log r.
\] across prediction contexts, architectures, and scales, emerging progressively during pretraining. To our knowledge, this is the first systematic analysis of the logit geometry underlying the distributional assumptions used in the prior decoding work such as \cite{basu2021mirostat}.  While controlled order-statistic baselines spanning bounded, light-tailed, and heavier-tailed distributions can largely explain the observed log-linear geometry, they do not explain the systematic shift in the slope $\alpha$: trained language models exhibit $\alpha \approx 1.4$--$1.7$, sitting well above the initialization  baseline ($\alpha \approx 0.2$) . Moreover, $\alpha$ correlates with predictive performance, suggesting that it captures learned structure beyond generic extreme-value effects.
\end{abstract}

\section{Introduction}

At every prediction step, an autoregressive language model assigns a score to every token in its vocabulary, which governs how probability mass is distributed over possible continuations. Sampling is the last step of generation; the logits are where the real decision is already made. While substantial work has studied the resulting token probabilities, generated text, and hidden representations, the structure of this vocabulary-wide logit vector remains comparatively unexplored.

One natural way to expose this structure is to ignore token identities and examine the logits in rank order. This perspective is also directly relevant to common decoding procedures, such as top-$k$ and top-$p$ sampling, which operate on the vocabulary head by selecting or truncating the highest-ranked logits. This leads to a simple question: \textbf{\emph{Does the ordered logit vector possess a simple geometric structure?}}

We empirically find a remarkably regular structure in the vocabulary head: ordered logits follow an approximately log-linear relation across prediction contexts, architectures, and model scales, and this structure emerges during pretraining. This raises the question of how much of this structure is attributable to language models and how much arises generically from order statistics.

Controlled order-statistic baselines show that ranking any finite collection of random variables induces its own structure, and light-tailed distributions can produce approximately log-linear ordered spectra. We therefore focus on the spectral slope $\alpha$, examining how it differs from generic baselines, how it varies across models, and what information it carries about the model's predictive state.

Our contributions are:
\textbf{(i)} We characterize a consistent log-linear geometry in the ordered
vocabulary-head logits of autoregressive language models across contexts,
architectures, scales, and training.
\textbf{(ii)} We distinguish generic order-statistic geometry from learned
structure using controlled distributional baselines, identifying the spectral
slope $\alpha$ as the key empirical distinction between the regimes.
\textbf{(iii)} We show that the spectral distribution can be efficiently reconstructed from the vocabulary head and that $\alpha$ correlates with \emph{pre-hoc} predictive
uncertainty (entropy) and \emph{post-hoc} prediction quality, measured by NLL,
ground-truth token probability $p_{\mathrm{true}}$, and ground-truth token
rank.

\section{Preliminaries}

\paragraph{Autoregressive prediction.}
At each prediction step, a decoder-only Transformer
\citep{vaswani2017attention} maps the preceding context to a hidden state
$h_k\in\mathbb{R}^{d}$ and produces a vocabulary-sized logit vector
$z_k=W_u h_k+b\in\mathbb{R}^{V}$. The autoregressive factorization and
conditional next-token distribution are
\[
P(t_{1:m})=\prod_{k=1}^{m}P(t_k\mid t_{1:k-1})
\qquad\qquad
P(t_k=v\mid t_{1:k-1})
=
\frac{\exp(z_{k,v})}
{\sum_{v'}\exp(z_{k,v'})}.
\]
Here, $W_u\in\mathbb{R}^{V\times d}$ is the unembedding matrix and
$b\in\mathbb{R}^{V}$ is the output bias. Inference-time sampling may further
rescale $z_k$ by a temperature $T$ before softmax \citep{ackley1985}.

\paragraph{Rank-space representation.}
Because token identities are not relevant to the structural question we aim to
answer, we study the vocabulary head in rank space by sorting the logits in
descending order:
\[
z_{(1)}\ge z_{(2)}\ge\cdots\ge z_{(V)},
\qquad
z_{(r)}:=\textit{$r$-th largest entry of }z_k.
\]
For a chosen head size $K\le V$, we analyze the ordered spectrum
$\{z_{(r)}\}_{r=1}^{K}$.

\paragraph{Relation to Zipfian decoding.}
Our analysis is closely related to the Zipfian assumption underlying Mirostat
\citep{basu2021mirostat}, but addresses a more fundamental structural question.
While Mirostat exploits a power-law next-token distribution for adaptive
decoding, we characterize the ordered logit spectrum before softmax, ask
whether its structure is generic or learned, and show that its two-parameter
law can approximately reconstruct the full next-token distribution from a
small exact vocabulary head. A detailed comparison is provided in
Appendix~\ref{appendix:related-work}.
\section{Results}
\label{sec:results}
\subsection{Log-Linear Logit Geometry}

\begin{figure}[b]
    \centering
    \begin{minipage}[t]{0.32\textwidth}
        \centering
        \includegraphics[width=\linewidth]{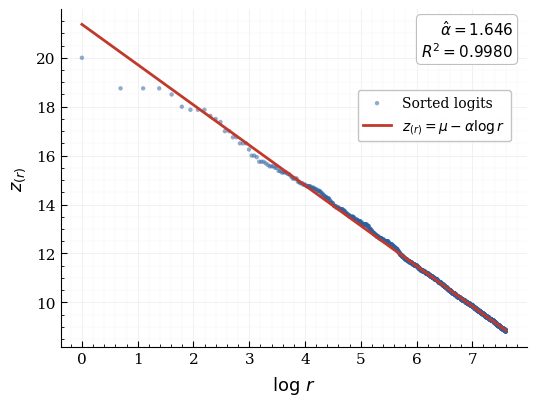}
    \end{minipage}
    \hfill
    \begin{minipage}[t]{0.32\textwidth}
        \centering
        \includegraphics[width=\linewidth]{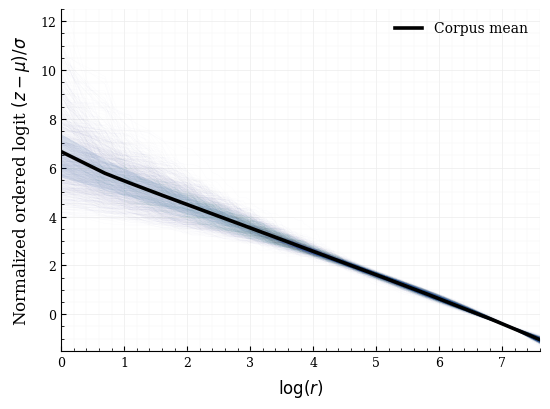}
    \end{minipage}
    \hfill
    \begin{minipage}[t]{0.32\textwidth}
        \centering
        \includegraphics[width=\linewidth]{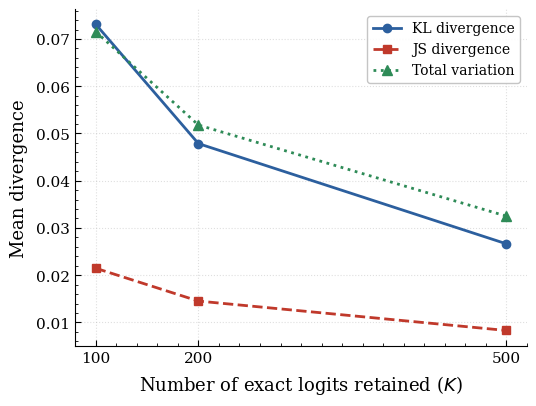}
    \end{minipage}
    
    \caption{
    \textbf{Ordered logit geometry.}
    (a) Ordered logit spectrum for a representative prediction context from
    Llama-3.2-1B, with logits sorted in descending order and plotted against
    $\log r$. The red line shows the least-squares fit of
    $z_{(r)}=\mu-\alpha\log r$ over the top $K=2000$ vocabulary ranks.
    (b) Collapse of normalized ordered logit spectra across 1,000 randomly
    sampled prediction contexts from ten books for Llama-3.2-1B. Each
    spectrum is normalized independently as $(z-\mu)/\sigma$, and the black
    curve shows the corpus mean.
    (c) Mean divergence between the true next-token distribution and its
    spectral reconstruction using $K$ exact logits and a two-parameter
    log-linear approximation of the remaining ranks, averaged over four
    language models.
    Experimental details and additional results are provided in
    Appendix~\ref{appendix:experimental-details}
    }
    \label{fig:logit_geometry}
\end{figure}

For a prediction context, let $z_{(1)} \ge \cdots \ge z_{(V)}$ denote the vocabulary logits sorted in descending order. We empirically observe that the vocabulary head is well approximated by
\[
z_{(r)} \approx \mu - \alpha \log r,
\qquad r=1,\ldots,K,
\]
where $\mu$ is a context-dependent intercept and $\alpha>0$ is the spectral slope. Across the models and contexts studied, this relation provides an excellent fit over the vocabulary head, with $R^2>0.99$ for a broad range of $K$.

\begin{figure}[b]
    \centering
    \includegraphics[width=\linewidth]{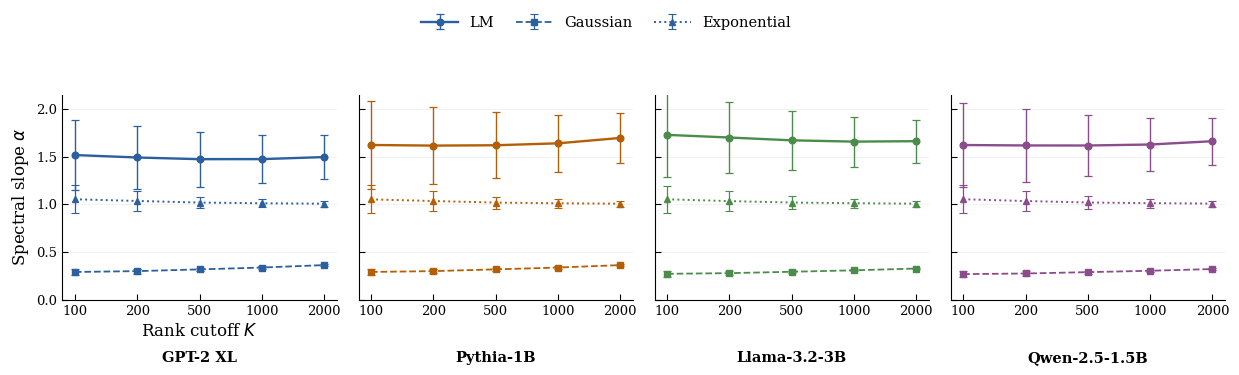}
    \caption{
    \textbf{Rank-cutoff robustness of the learned spectral regime.}
    Spectral slope $\alpha$ as a function of rank cutoff $K$ for representative
    models from four model families. Each language model is compared with
    Gaussian and Exponential order-statistic baselines matched to its vocabulary
    size. Error bars denote standard deviations across the 10 books for language
    models and across synthetic samples for the null distributions. The corresponding dependence of goodness of fit $R^2$ on $K$ is reported in
    Figure~\ref{fig:r2_k_dependence} (Appendix~\ref{sec:ablation}).
    }
    \label{fig:k_dependence}
\end{figure}

Figure~\ref{fig:logit_geometry}(a) shows that the log-linear structure
persists across substantially different prediction contexts, rather than
being tied to a particular sequence. \textbf{Normalization reveals a strong
cross-context spectral collapse} (Figure~\ref{fig:logit_geometry}(b)),
suggesting that much of the vocabulary-wide variation is captured by a
low-dimensional rank-space structure. Robustness across $K$, models, and
corpora is reported in Appendix~\ref{appendix:k-dependence}.

\subsection{Generic vs. Learned Structure}

The near-perfect fit of ordered logits does not by itself imply
language-model-specific structure. We therefore apply the same fitting
procedure to controlled order-statistic baselines spanning bounded,
light-tailed, and heavier-tailed distributions. 

Figure~\ref{fig:k_dependence} shows that this separation is not specific to
the choice of rank cutoff. Across a broad range of $K$, language models exhibit
stable slopes, while the order-statistic baselines display distribution-dependent
behavior. Uniform
samples fit substantially worse, with average $R^2\approx0.75$ across
vocabulary sizes and $\alpha\approx0.001$--$0.020$, while Gaussian and
Exponential samples achieve $R^2\approx0.99$ and $R^2\approx0.97$,
respectively. Lognormal samples achieve $R^2\approx0.92$ but produce much
larger slopes, $\alpha\approx4.67$--$10.6$. Thus, although log-linear
structure can arise generically from order statistics, its fit and associated
slope depend substantially on the underlying distribution.

\begin{figure*}[t]
  \centering
  \includegraphics[width=\textwidth]{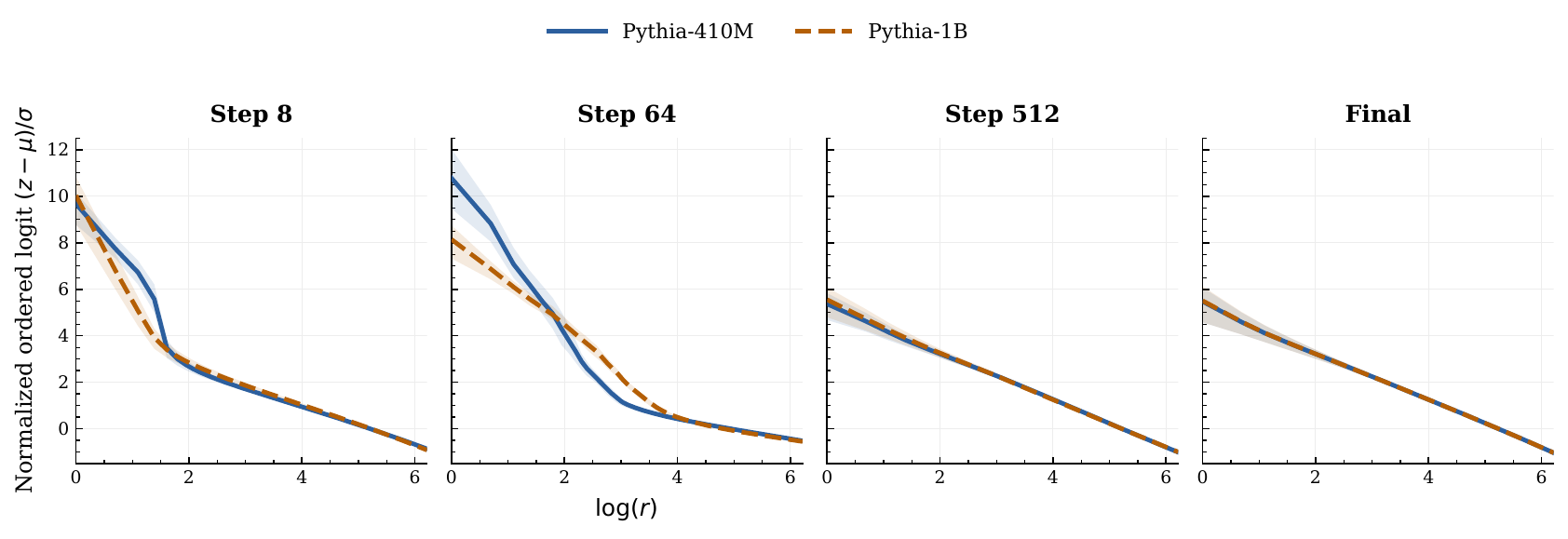}
 \caption{
\textbf{Evolution of ordered logit geometry during pretraining.}
Mean normalized ordered-logit spectra for Pythia-410M (solid) and Pythia-1B
(dashed) across representative training checkpoints. Shaded regions denote
the interquartile range across prediction contexts.
}
  \label{fig:training_dynamics}
\end{figure*}

\textbf{What shapes the spectral slope $\alpha$ ?}
The cross-entropy objective provides a partial answer. At the population
optimum, $z_i(h)=\log q(i\mid h)+c(h)$, making the log-linear logit law
equivalent to an approximately Zipfian conditional distribution
$q(\cdot\mid h)$ - the same regularity estimated online by
Mirostat~\citep{basu2021mirostat}. This equivalence does not itself explain
why language-model distributions should be Zipfian: it holds exactly only at
the unconstrained population optimum. Rather, we view the empirical tightness
and stability of $\alpha$ across architectures and scales as the phenomenon
requiring explanation. Further equivalences and consequences are given in
Appendix~\ref{appendix:theory-proofs}. 

\textbf{Evolution during pretraining.}
\label{sec:emergence}
Publicly available Pythia checkpoints show that the spectral regime develops
during training: $\alpha$ increases from approximately $0.2$ at initialization,
while normalized spectra progressively approach the final-model shape, with
most changes occurring early and the structure clearly established by step 512
(Figure~\ref{fig:training_dynamics}). Training therefore systematically shapes
the spectral parameters within an ordered-logit structure. Additional analyses
of the training dynamics are provided in Appendix~\ref{appendix:null-models}.

\subsection{Spectral Reconstruction of Next-Token Distributions}
\label{subsec:predictive-behavior}

The log-linear structure also provides a compact approximation of the
next-token distribution. Given only the top-$K$ ordered logits, we retain
their exact values and fit the spectral parameters $(\mu,\alpha)$ by least
squares. The remaining vocabulary logits are then reconstructed according
to $\hat z_{(r)}=\hat\mu-\hat\alpha\log r$, yielding an approximate
distribution after softmax.

\textbf{Distribution reconstruction.}
Despite reconstructing the majority of vocabulary logits using only the
two fitted spectral parameters, the resulting distributions remain close
to the exact softmax distribution, with divergence consistently decreasing
as the number of retained exact logits $K$ increases Figure~\ref{fig:logit_geometry}(c).
At $K=500$ ($\leq 1\%$ of the vocabulary), the reconstruction achieves
a mean Jensen--Shannon divergence of only $0.008$ across four model
families. Thus, a small exact head together with a two-parameter spectral
description of the remaining tail approximately recovers the full
rank-space next token distribution(see Appendix~\ref{appendix:spectral-approximation}.)

\textbf{Relation to predictive state.}
The spectral slope $\alpha$ also varies systematically with the model's
predictive state. Across 40 model-book pairs, larger $\alpha$ is associated
with lower uncertainty and better prediction of the realized next token:
entropy correlates at $r=-0.332$, NLL and ground-truth log-rank at
$r=-0.206$ and $r=-0.209$, and $p_{\mathrm{true}}$ at $r=+0.111$.
These associations are highly consistent across pairs and persist without
$R^2$ filtering and under Spearman correlations, suggesting that $\alpha$
captures variation in the predictive state beyond ordered logit geometry
(see Appendix~\ref{appendix:predictive-analysis}).

\section{Discussion}

Our results reveal two complementary aspects of the ordered logit spectrum.
Approximately log-linear geometry can arise generically from order statistics,
while training systematically shapes the spectrum through $\alpha$. This yields a compact rank-space representation of
vocabulary-sized predictions: a small exact head and two fitted parameters can
approximately reconstruct the ranked
probability spectrum. This compact structure may provide a basis for approximating vocabulary-sized predictive distributions without
retaining the full spectrum. Moreover,
$\alpha$ systematically tracks aspects of the model's predictive state, providing a compact summary of variation in next-token
predictions.

\paragraph{Limitations and future work.}
Our analysis focuses on pretrained decoder-only language models and provides
sufficient rather than necessary theoretical conditions. The origin of the
learned spectral regime, and in particular why optimization drives the spectrum
toward its observed parameter range, remains an open question.  Moreover,
our reconstruction concerns the ranked probability spectrum rather than token
identities in the tail. Future work
should examine whether these dynamics extend to other model architectures and
whether the spectral functional can support uncertainty estimation,
generation-state tracking, distribution approximation or decoding control.

\newpage
\bibliographystyle{plainnat}
\bibliography{custom}

\newpage
\appendix

\section{Related Work}
\label{appendix:related-work}

\paragraph{Statistical laws of language and generated text.}
Zipf's law describes corpus-level token frequencies aggregated over natural
language \citep{zipf1949,mandelbrot1953,piantadosi2014}. More recently,
several works have investigated whether neural language models reproduce such
statistical laws in generated text \citep{takahashi2019,meister2021}. Our
object of study is different: we analyze the model's predicted logit vector
for each individual prediction context, before any token is sampled. The
resulting log-linear law is therefore a \emph{pre-hoc} property of conditional
predictions rather than a post-hoc property of natural or generated text.

\paragraph{Structure of the conditional next-token distribution.}
Several works exploit regularities in the conditional next-token distribution
to design decoding algorithms. Most closely related is Mirostat
\citep{basu2021mirostat}, which assumes Zipfian statistics of the predicted
distribution and estimates the corresponding exponent online to adapt the
sampling cutoff and control perplexity. Our work asks a prior structural
question: \emph{why and how does such a power-law form arise in the first
place?} Rather than positing Zipfian structure at the probability level for
decoding, we directly study the ordered logits before softmax and find that
they follow a remarkably tight log-linear law across individual prediction
contexts, architectures, and scales.

This perspective also separates generic structure induced by sorting from
structure learned by the language model. We characterize the observed spectral
slope $\alpha$, study its variation across predictive states, show that the
geometry emerges progressively during pretraining, and demonstrate that a
small exact vocabulary head together with the fitted spectral tail can
approximately reconstruct the full next-token distribution. At the
cross-entropy optimum, our logit law is exactly equivalent to a Zipfian
conditional distribution, establishing a direct theoretical connection to the
probability-level assumption used by Mirostat.

\paragraph{Geometry of the output layer.}
A separate line of work studies geometric properties of transformer outputs,
including the softmax bottleneck \citep{yang2018}, stolen probability
\citep{demeter2020}, representation degeneration \citep{gao2019}, anisotropy
\citep{ethayarajh2019}, and low-rank structure in collections of logit vectors
across prompts. These perspectives are complementary: while prior work
primarily studies structure across contexts or representations, we identify an
interpretable \emph{within-context} geometry of the vocabulary-sized output
itself. The resulting two-parameter description provides a compact descriptor
of each prediction step.

\paragraph{Training dynamics and theoretical characterization.}
Prior work has largely focused on analyzing trained models. We additionally
study the emergence of ordered logit geometry during pretraining using
intermediate checkpoints from the Pythia suite \citep{biderman2023pythia},
showing that the law develops progressively through optimization. On the
theoretical side, rather than assuming a particular generative model for the
observed spectrum, we characterize its relation to conditional Zipfian
structure at the cross-entropy optimum, give an equivalent distance-based
reparametrization, and establish deterministic consequences for tail counts
and the softmax distribution.

\section{Experimental Setup}
\label{appendix:experimental-details}
\subsection{Models}

We evaluate four families of decoder-only autoregressive language models spanning a range of model scales, vocabularies, and training procedures: GPT-2 (124M, 355M, 774M, 1.5B) \citep{radford2019language}, Pythia (410M, 1B) \citep{biderman2023pythia}, Llama~3.2 (1B, 3B) \citep{llama3_2024}, and Qwen~2.5 (0.5B, 1.5B) \citep{qwen2025}. To study the emergence of the ordered logit geometry during training, we additionally evaluate intermediate checkpoints of the Pythia models at optimization steps $\{8,64,512,143000\}$. All models are evaluated in inference mode without gradient computation.

\subsection{Corpora}

Experiments are conducted on ten books from Project Gutenberg \footnote{\href{https://www.gutenberg.org}{Gutenberg Corpus}} covering diverse genres including realism, science fiction, mystery and adventure. Each text is tokenized using the model's native tokenizer. We use the first 20,000 tokens of each book with a context length of 128 tokens and stride 1, yielding approximately $2\times10^4$ prediction contexts per book. For every prediction context, we store the top-$K$ sorted logits ($K=2000$).The experimental code is available.\footnote{\href{https://anonymous.4open.science/r/Log_linear_logit_law-D978}{Repository}}. As an ablation, we additionally verify this collapse on a different model and corpus (GPT-2 on WikiText-2), reported in Appendix~\ref{sec:ablation}.

\subsection{Fitting Procedure}
\label{appendix:fitting-procedure}
For each prediction context, we fit the log-linear model
\[
z_{(r)}=\mu-\alpha\log r,
\qquad r=1,\ldots,K,
\]
using ordinary least squares. We report the fitted slope $\hat{\alpha}$, intercept $\hat{\mu}$, coefficient of determination ($R^2$), and root mean squared error (RMSE). Per-model statistics are reported as the mean of per-book means, with standard deviation across books as a measure of corpus stability. For the normalized spectrum analysis, each logit spectrum is standardized as
\[
\tilde{z}_{(r)}=\frac{z_{(r)}-\bar{z}}{\sigma},
\]
before averaging across contexts.

\section {Additional Experiments}
\label{sec:ablation}

\subsection{Functional Form Selection}
\label{appendix:functional-form}

We compare the log-linear form $z_{(r)} = \mu - \alpha \log r$ against five
alternatives on prediction contexts from Llama-3.2 (1B); results are in
Table~\ref{tab:functional_form}.

\begin{table}[h]
\centering
\small
\setlength{\tabcolsep}{5pt}
\renewcommand{\arraystretch}{1.15}
\caption{Comparison of functional forms fitted to ordered logit spectra
(Llama-3.2 1B, $K=2000$). \# is the number of free parameters.}
\label{tab:functional_form}
\begin{tabular}{@{}lccc@{}}
\toprule
\textbf{Form} & \textbf{\#} & $R^2$ & \textbf{RMSE} \\
\midrule
Logarithmic & 2 & $0.9922 \pm 0.0074$ & $0.1299 \pm 0.0588$ \\
Power       & 3 & $0.9962 \pm 0.0045$ & $0.0874 \pm 0.0437$ \\
Exponential & 2 & $0.9460 \pm 0.0290$ & $0.3678 \pm 0.0849$ \\
Square-root & 2 & $0.9043 \pm 0.0391$ & $0.4941 \pm 0.1298$ \\
Linear      & 2 & $0.7752 \pm 0.0534$ & $0.7692 \pm 0.1464$ \\
Inverse     & 2 & $0.1950 \pm 0.0409$ & $1.4642 \pm 0.2010$ \\
\bottomrule
\end{tabular}
\end{table}

The three-parameter power law $z_{(r)} = a \cdot r^{-b} + c$ is essentially
matched by the two-parameter log-linear form, with its small residual gain
in fit attributable to the extra parameter rather than a qualitatively
better description. We adopt the log-linear form for two reasons: (1)
\textbf{parsimony}: it reaches $R^2 > 0.99$ with two parameters, one
fewer than the power law; and (2) \textbf{softmax tractability}: it
yields a closed-form Zipf law $p_r \propto r^{-\alpha}$, whereas
the power law admits no comparably tractable softmax expression. We do
not claim log-linear is uniquely correct, only that it is the simplest
form consistent with the data and better suited to the softmax-based
results in Section~\ref{sec:results}.

\subsection{Ablations}

\begin{figure}[t]
    \centering
    \begin{minipage}[t]{0.48\columnwidth}
        \centering
        \includegraphics[width=\linewidth]{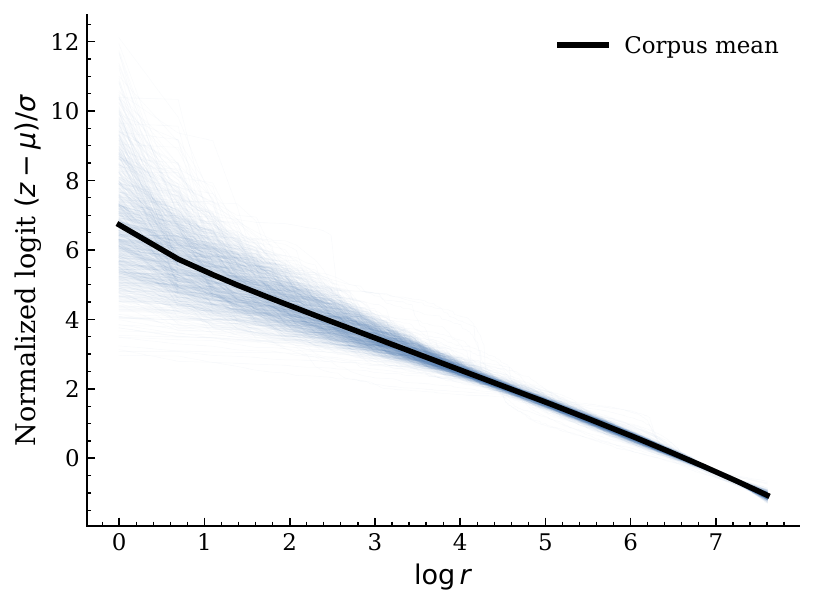}
    \end{minipage}
    \hfill
    \begin{minipage}[t]{0.48\columnwidth}
        \centering
        \includegraphics[width=\linewidth]{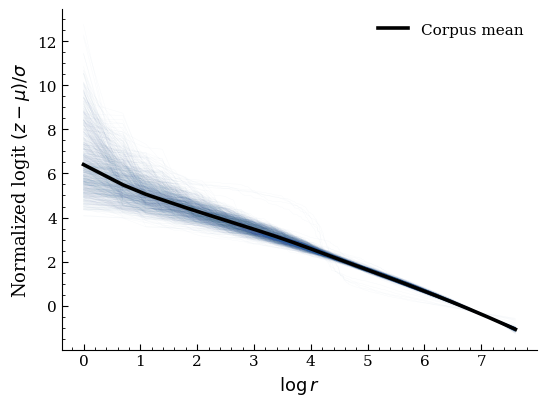}
    \end{minipage}

    \caption{
    \textbf{Cross-corpus normalized logit collapse for GPT-2.}
    \textbf{Left:} GPT-2 evaluated on WikiText-2.
    \textbf{Right:} GPT-2 evaluated on French Wikipedia.
    In both panels, blue curves show individual prediction contexts and the
    black curve shows the corpus mean. The normalized ordered-logit spectra
    $(z-\mu)/\sigma$ exhibit a shared trajectory across contexts in both
    corpora.
    }
    \label{fig:gpt2_corpus_ablation}
\end{figure}

To verify that the rank-logit collapse is not specific to the primary model or
corpus, we repeat the analysis with GPT-2 on WikiText-2 and French Wikipedia.
As shown in Figure~\ref{fig:gpt2_corpus_ablation}, the normalized logit curves
$(z-\mu)/\sigma$ collapse onto a shared trajectory across contexts in both
corpora, consistent with the pattern observed in our primary experiments.

\subsection{Additional Results}
\label{appendix:k-dependence}

\begin{table}[h]
\centering
\small
\setlength{\tabcolsep}{6pt}
\renewcommand{\arraystretch}{1.15}
\caption{Log-linear fit statistics across autoregressive language model families, averaged over 10 books ($K=2000$). Reported $\alpha$ is mean $\pm$ standard deviation across books; 
$R^2$ is the mean per-context coefficient of determination, averaged across books.}
\label{tab:model_summary}
\begin{tabular}{@{}llcc@{}}
\toprule
\textbf{Family} & \textbf{Model} & $\alpha$ & $R^2$ \\
\midrule
\multirow{4}{*}{GPT-2}
 & 124M  & $1.426 \pm 0.243$ & 0.9910 \\
 & 355M  & $1.413 \pm 0.213$ & 0.9914 \\
 & 774M  & $1.488 \pm 0.220$ & 0.9905 \\
 & 1.5B  & $1.494 \pm 0.229$ & 0.9898 \\
\midrule
\multirow{2}{*}{Pythia}
 & 410M  & $1.649 \pm 0.260$ & 0.9889 \\
 & 1.0B  & $1.695 \pm 0.266$ & 0.9889 \\
\midrule
\multirow{2}{*}{Llama-3.2}
 & 1.0B  & $1.629 \pm 0.234$ & 0.9915 \\
 & 3.0B  & $1.661 \pm 0.224$ & 0.9908 \\
\midrule
\multirow{2}{*}{Qwen-2.5}
 & 0.5B  & $1.558 \pm 0.229$ & 0.9918 \\
 & 1.5B  & $1.660 \pm 0.247$ & 0.9917 \\
\bottomrule
\end{tabular}
\end{table}

\begin{figure}[t]
    \centering
    \includegraphics[width=\linewidth]{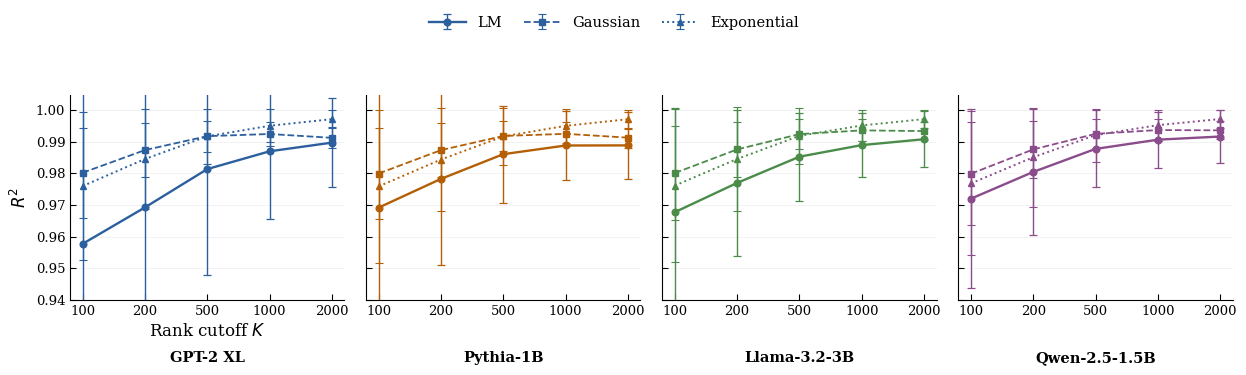}
    \caption{
    \textbf{Dependence of goodness of fit on rank cutoff.}
    Coefficient of determination $R^2$ as a function of rank cutoff $K$ for the
    same representative language models and vocabulary-matched Gaussian and
    Exponential order-statistic baselines as in Figure~\ref{fig:k_dependence}.
    Error bars denote standard deviations across the 10 books for language
    models and across synthetic samples for the null distributions.
    }
    \label{fig:r2_k_dependence}
\end{figure}

\paragraph{Consistency Across Model Families .}Table~\ref{tab:model_summary} summarizes the fitted log-linear parameters across all evaluated models. Despite substantial differences in architecture, tokenizer, training corpus, and more than an order-of-magnitude variation in model size, the ordered logit spectrum is consistently well approximated by a log-linear law ($R^2 > 0.99$). More strikingly, the fitted slope remains confined to a narrow range ($1.41 \leq \alpha \leq 1.70$), indicating that modern autoregressive language models exhibit remarkably similar ordered logit geometry despite their diverse training procedures and scales.

\paragraph{Dependence of goodness of fit on $K$.}
The goodness of fit generally improves as the rank cutoff increases for both
language models and order-statistic baselines. Language-model fits approach
$R^2\approx0.99$ at larger $K$, but Gaussian and Exponential baselines attain
similarly high or higher $R^2$ over the same range. Thus, the improvement in
$R^2$ with $K$ is not specific to language models and should not itself be
interpreted as evidence of learned structure. Instead, we use the spectral
slope $\alpha$ to distinguish the language-model spectra from generic
order-statistic geometry.

\section{Additional Training Dynamics}
\label{appendix:null-models}

\paragraph{Evolution of the spectral regime.}
We further quantify the evolution of the ordered-logit law throughout
pretraining using intermediate checkpoints from the Pythia-410M and
Pythia-1B suites. Table~\ref{tab:training-dynamics} reports the fitted
spectral exponent $\alpha$ and goodness of fit at different training stages.

Both models exhibit a substantial change in the spectral regime during the
early stages of optimization. At initialization (Step~0), the ordered spectra
already admit a highly linear fit, but occupy a shallow regime, with
$\alpha \approx 0.20$ at $K=500$, closer to the Gaussian synthetic baseline
($\alpha \approx 0.3$) than to the trained models. This exponent is
substantially smaller than that of the trained models, whose final values
reach $\alpha \approx 1.57$--$1.62$ at the same cutoff.

\begin{table}[htbp]
\centering
\small
\caption{Evolution of the ordered-logit spectral regime during pretraining.
We report the mean fitted exponent $\alpha$ and $R^2$ at $K=500$ across
evaluation contexts for Pythia-410M and Pythia-1B.}
\label{tab:training-dynamics}

\begin{tabular}{llcc|cc}
\toprule
& & \multicolumn{2}{c}{Pythia-410M}
& \multicolumn{2}{c}{Pythia-1B} \\
\cmidrule(lr){3-4}
\cmidrule(lr){5-6}
Checkpoint & Training stage
& $\alpha$ & $R^2$
& $\alpha$ & $R^2$ \\
\midrule

Step 0
& Initialization
& 0.203 & 0.992
& 0.199 & 0.991 \\

Step 8
& Early
& 0.258 & 0.881
& 0.241 & 0.909 \\

Step 64
& Early
& 0.507 & 0.680
& 0.779 & 0.805 \\

Step 512
& Intermediate
& 1.128 & 0.990
& 1.164 & 0.988 \\

Step 8000
& Late
& 1.485 & 0.987
& 1.498 & 0.987 \\

Final
& Trained
& 1.571 & 0.987
& 1.618 & 0.986 \\

\bottomrule
\end{tabular}

\end{table}

In the earliest stages of training, the log-linear structure itself undergoes
a substantial departure from the final regime Figure~\ref{fig:training_dynamics}. For example, at Step~64, the
$K=500$ fit reaches only $R^2=0.68$ and $0.81$ for Pythia-410M and
Pythia-1B, respectively. As training progresses, both the exponent and
goodness of fit move toward their final values. By Step~512, the $K=500$
exponent reaches $1.13$ for Pythia-410M and $1.16$ for Pythia-1B, with
$R^2 \approx 0.99$. At Step~8000, both models are already close to their
final spectral regime, with $\alpha=1.48$--$1.50$ and
$R^2\approx0.987$. The final checkpoints further sharpen this regime,
reaching $\alpha=1.57$ for Pythia-410M and $1.62$ for Pythia-1B.

These dynamics indicate that training does not merely preserve or amplify a
generic consequence of ordering. Rather, optimization transforms the initial
shallow spectrum through a transient non-log-linear phase into a distinct
spectral regime characterized by a substantially larger and increasingly
stable exponent. The learned phenomenon is therefore characterized jointly by
the emergence of strong log-linearity and a nontrivial spectral scale that is
not present in either synthetic or untrained baselines.

\section{Spectral Approximation of Next-Token Distributions}
\label{appendix:spectral-approximation}

We provide additional details and model-wise results for the spectral
reconstruction experiment in Section~\ref{subsec:predictive-behavior}.
The experiment evaluates whether the ordered logit spectrum can be used to
approximately recover the full next-token distribution from a small exact
head and a two-parameter description of the remaining ranks.

\subsection{Spectral Reconstruction}

For each prediction context, let
$z_{(1)} \geq z_{(2)} \geq \cdots \geq z_{(V)}$
denote the vocabulary logits sorted in descending order. Given the first
$K$ exact ordered logits, we fit
\[
z_{(r)} \approx \mu - \alpha \log r
\]
by ordinary least squares over $r=1,\ldots,K$. We retain the first $K$
logits exactly and reconstruct the remaining ranks as
\[
\hat z_{(r)} =
\begin{cases}
z_{(r)}, & r \leq K,\\
\hat\mu-\hat\alpha\log r, & r>K.
\end{cases}
\]
Applying softmax to the reconstructed spectrum yields the approximate
distribution $\hat p$, which is compared with the exact distribution
$p=\mathrm{softmax}(z)$.

We evaluate reconstruction using Kullback--Leibler (KL) divergence,
Jensen--Shannon (JS) divergence, and total variation (TV) distance:
\[
D_{\mathrm{KL}}(p\Vert\hat p), \qquad
D_{\mathrm{JS}}(p,\hat p), \qquad
D_{\mathrm{TV}}(p,\hat p).
\]

Results are aggregated over 39,520 prediction contexts sampled from ten
books across four language models. We consider $K\in\{100,200,500\}$.

\subsection{Model-wise Results}

Table~\ref{tab:spectral-reconstruction-modelwise} reports the complete
model-wise reconstruction results. Divergence decreases consistently as
more exact logits are retained. At $K=500$, mean JS divergence ranges from
$0.0058$ to $0.0094$ across models, while mean TV ranges from $0.0235$ to
$0.0369$.

\begin{table*}[htbp]
\centering
\small
\caption{Model-wise spectral reconstruction results. Divergences compare
the exact next-token distribution with the distribution reconstructed from
the top-$K$ exact ordered logits and a log-linear approximation of the
remaining ranks.}
\label{tab:spectral-reconstruction-modelwise}
\begin{tabular}{llrrrrr}
\toprule
Model & $K$ & $N$ & Mean $R^2$ & Mean KL & Mean JS & Mean TV \\
\midrule
Pythia-1B & 100 & 9880 & 0.9689 & 0.0805 & 0.0238 & 0.0772 \\
          & 200 & 9880 & 0.9781 & 0.0536 & 0.0163 & 0.0566 \\
          & 500 & 9880 & 0.9860 & 0.0292 & 0.0091 & 0.0348 \\
\midrule
Qwen2.5-1.5B & 100 & 9880 & 0.9721 & 0.0848 & 0.0249 & 0.0786 \\
             & 200 & 9880 & 0.9803 & 0.0548 & 0.0167 & 0.0562 \\
             & 500 & 9880 & 0.9874 & 0.0300 & 0.0094 & 0.0347 \\
\midrule
GPT-2 XL & 100 & 9880 & 0.9568 & 0.0675 & 0.0197 & 0.0717 \\
        & 200 & 9880 & 0.9686 & 0.0468 & 0.0141 & 0.0543 \\
        & 500 & 9880 & 0.9811 & 0.0285 & 0.0088 & 0.0369 \\
\midrule
Llama-3.2-3B & 100 & 9880 & 0.9666 & 0.0595 & 0.0175 & 0.0587 \\
             & 200 & 9880 & 0.9760 & 0.0363 & 0.0110 & 0.0400 \\
             & 500 & 9880 & 0.9847 & 0.0189 & 0.0058 & 0.0235 \\
\bottomrule
\end{tabular}
\end{table*}

\subsection{Aggregate Results}

Across all four models, reconstruction quality improves consistently with
$K$. The mean KL divergence decreases from $0.0731$ at $K=100$ to $0.0479$
at $K=200$ and $0.0266$ at $K=500$. Mean JS divergence decreases from
$0.0215$ to $0.0145$ and $0.0083$, while mean TV decreases from $0.0715$ to
$0.0518$ and $0.0325$, respectively.

\begin{table}[t]
\centering
\small
\caption{Aggregate spectral reconstruction results across 39,520 contexts
and four language models.}
\label{tab:spectral-reconstruction-overall}
\begin{tabular}{rrrrr}
\toprule
$K$ & Mean KL & Mean JS & Mean TV & Mean $R^2$ \\
\midrule
100 & 0.0731 & 0.0215 & 0.0715 & 0.9661 \\
200 & 0.0479 & 0.0145 & 0.0518 & 0.9758 \\
500 & 0.0266 & 0.0083 & 0.0325 & 0.9848 \\
\bottomrule
\end{tabular}
\end{table}

\section{Predictive-Behavior Analysis}
\label{appendix:predictive-analysis}

\paragraph{Setup.}
For each prediction context, we sort the vocabulary logits and fit
$$
z_{(r)}=\mu-\alpha\log r,\qquad r=1,\ldots,K,
$$
with $K=2000$ by ordinary least squares. We evaluate GPT-2 XL, Pythia-1B,
Llama-3.2-3B, and Qwen-2.5-1.5B on ten Project Gutenberg books per model.
Contexts have length 128 and are sampled every 128 tokens, yielding
non-overlapping windows. For each context, we record $\alpha$, $R^2$, NLL,
$p_{\mathrm{true}}$, true-token rank, and full-vocabulary entropy
$$
H=-\sum_{v=1}^{V}p_v\log p_v.
$$
We use $\log(\mathrm{true\ rank})$ for the rank analysis. Pearson and Spearman
correlations between $\alpha$ and each statistic are computed separately
within each of the 40 model--book pairs, then pooled with a
DerSimonian-Laird random-effects meta-analysis on Fisher-$z$ transformed
correlations (sampling variance $1/(N-3)$), giving pooled correlations,
95\% confidence intervals, two-sided $p$-values, and heterogeneity $I^2$.
An exact two-sided binomial sign test over the 40 book-level correlations is
included as a distribution-free check, and we also repeat the sign test at
the level of the four model-family averages. We report results both with a
fit-quality filter ($R^2\geq0.95$) and without one ($R^2\geq0$).

\paragraph{Correlation results.}
Table~\ref{tab:predictive_full_stats} reports the pooled Pearson and Spearman
correlations between $\alpha$ and each predictive statistic, together with
the sign-test summary.

\begin{table}[htbp]
\centering
\setlength{\tabcolsep}{3.5pt}
\setlength{\belowcaptionskip}{4pt}
\renewcommand{\arraystretch}{1.02}
\caption{
Predictive-behavior analysis across 40 model--book pairs. Entries are
random-effects pooled Pearson and Spearman correlations with 95\%
confidence intervals. The Sign column reports the percentage of
book-level correlations with negative/positive sign. For the Pearson
meta-analysis, $p<1e\!-\!12$ for all statistics except $p_{\mathrm{true}}$,
for which $p=2.02e\!-\!11$ at $R^2\geq0.95$ and $p=1.36e\!-\!12$ at
$R^2\geq0$. Exact two-sided binomial sign-test $p$-values are given in
parentheses.
} 
\label{tab:predictive_full_stats} 
\begin{tabular}{@{}llccc@{\hspace{6pt}}cc@{}}
\toprule
\textbf{$R^2$} & \textbf{Statistic}
& \multicolumn{2}{c}{\textbf{Pearson}}
& \textbf{Sign}
& \multicolumn{2}{c}{\textbf{Spearman}} \\
\cmidrule(lr){3-4}
\cmidrule(lr){6-7}
& & $r$ & 95\% CI & & $\rho$ & 95\% CI \\
\midrule

\multirow{4}{*}{$\geq0.95$}
& Entropy
& $-0.332$ & $[-0.370,-0.293]$
& $\mathbf{-100\%}$ $(1.82e\!-\!12)$
& $-0.290$ & $[-0.331,-0.248]$ \\

& NLL
& $-0.206$ & $[-0.236,-0.175]$
& $\mathbf{-97.5\%}$ $(7.46e\!-\!11)$
& $-0.159$ & $[-0.193,-0.125]$ \\

& $p_{\mathrm{true}}$
& $+0.111$ & $[+0.079,+0.143]$
& $\mathbf{+87.5\%}$ $(1.38e\!-\!06)$
& $+0.159$ & $[+0.125,+0.193]$ \\

& Log-rank
& $-0.209$ & $[-0.236,-0.181]$
& $\mathbf{-97.5\%}$ $(7.46e\!-\!11)$
& $-0.146$ & $[-0.176,-0.116]$ \\

\midrule

\multirow{4}{*}{$\geq0$}
& Entropy
& $-0.290$ & $[-0.334,-0.245]$
& $\mathbf{-100\%}$ $(1.82e\!-\!12)$
& $-0.262$ & $[-0.304,-0.218]$ \\

& NLL
& $-0.244$ & $[-0.287,-0.201]$
& $\mathbf{-97.5\%}$ $(7.46e\!-\!11)$
& $-0.170$ & $[-0.206,-0.134]$ \\

& $p_{\mathrm{true}}$
& $+0.116$ & $[+0.084,+0.148]$
& $\mathbf{+87.5\%}$ $(1.38e\!-\!06)$
& $+0.170$ & $[+0.134,+0.206]$ \\

& Log-rank
& $-0.232$ & $[-0.265,-0.199]$
& $\mathbf{-97.5\%}$ $(7.46e\!-\!11)$
& $-0.159$ & $[-0.191,-0.127]$ \\

\bottomrule
\end{tabular}
\end{table}

\paragraph{Heterogeneity across model--book pairs.}
Table~\ref{tab:heterogeneity} reports the $I^2$ heterogeneity statistic for
each pooled Pearson correlation in Table~\ref{tab:predictive_full_stats}.
$I^2$ is moderate for NLL and $p_{\mathrm{true}}$ and higher for entropy,
indicating that the correlation with entropy varies more across model--book
pairs in magnitude, even though Table~\ref{tab:predictive_full_stats} shows
its \emph{sign} is consistent for all 40 pairs.

\begin{table}[h]
\centering
\small
\caption{Heterogeneity ($I^2$) of the pooled Pearson correlations reported in
Table~\ref{tab:predictive_full_stats}, by fit-quality filter.}
\label{tab:heterogeneity}
\begin{tabular}{@{}lcccc@{}}
\toprule
\textbf{$R^2$ filter} & Entropy & NLL & $p_{\mathrm{true}}$ & Log-rank \\
\midrule
$\geq0.95$ & $66.6\%$ & $37.3\%$ & $40.3\%$ & $24.7\%$ \\
$\geq0$    & $73.7\%$ & $69.7\%$ & $39.5\%$ & $48.9\%$ \\
\bottomrule
\end{tabular}
\end{table}

\paragraph{Robustness to the fit-quality filter.}
The $R^2\geq0.95$ filter excludes a different fraction of contexts in each
model--book pair; we call this the pair's \emph{rejection fraction} and ask
whether it is itself associated with that pair's average predictive
statistics, which would indicate the filter systematically selects easier or
harder contexts. Table~\ref{tab:rejection-fraction} shows none of the four
correlations is significant, so we find no evidence of such a bias.

\begin{table}[h]
\centering
\small
\caption{Pearson correlation between each model--book pair's $R^2\geq0.95$
rejection fraction and its mean predictive statistics ($N=40$ pairs). None
of the four associations is significant at $\alpha=0.05$.}
\label{tab:rejection-fraction}
\begin{tabular}{@{}lcc@{}}
\toprule
\textbf{Statistic} & $r$ & $p$ \\
\midrule
Entropy             & $0.227$  & $0.159$ \\
NLL                  & $0.210$  & $0.194$ \\
$p_{\mathrm{true}}$  & $-0.066$ & $0.684$ \\
Log-rank             & $0.189$  & $0.244$ \\
\bottomrule
\end{tabular}
\end{table}
\newpage
\section{Theoretical Characterization}
\label{appendix:theory-proofs}

This appendix collects the formal statements referenced in
Section~\ref{sec:results}. Section~\ref{subsec:mechanism} characterizes the
relationship between the log-linear logit law and an approximately Zipfian
conditional next-token distribution and clarifies what this relationship
does and does not establish. Sections~\ref{subsec:tail-count} and
\ref{subsec:zipf} derive two exact consequences of the empirical law, once it
holds for a given context: an equivalent exponential tail count
(Prop.~\ref{prop:tail-count}) and a conditional Zipf distribution with exact
temperature rescaling (Prop.~\ref{prop:zipf}), which together formalize the
assumption underlying Mirostat~\citep{basu2021mirostat}. Section~\ref{subsec:collapse-appendix}
formalizes the normalized-collapse behavior shown in
Figure~\ref{fig:logit_geometry}(b) (Prop.~\ref{prop:collapse}) and bounds its
robustness to residual fit error.

We distinguish two questions. The first concerns the consequences of the
vocabulary head being approximately log-linear in log-rank. These consequences
follow algebraically and do not depend on the origin of the observed law. The
second concerns why a language model's logits exhibit this form in the first
place. Our analysis provides only a partial account of this second question.

Fix a prediction context $h$ and write the vocabulary logits as
\[
    z_i(h)=w_i^\top h+b_i,
    \qquad i\in\mathcal V,
\]
where $w_i$ is the output embedding row for token $i$, $b_i$ is an output bias
when present, and $V=|\mathcal V|$. We suppress the dependence on $h$ when
clear, and write
\[
    z_{(1)}\ge z_{(2)}\ge\cdots\ge z_{(V)}
\]
for the logits sorted in descending order.

\subsection{Relation to a Zipfian conditional distribution}
\label{subsec:mechanism}

\paragraph{An exact equivalence at the cross-entropy optimum.}
At the population optimum of the cross-entropy objective, the softmax
probabilities for a fixed context $h$ equal the true conditional distribution
$q(\cdot\mid h)$, so the logits satisfy
\[
    z_i(h) = \log q(i\mid h) + c(h)
\]
for an arbitrary context-dependent constant $c(h)$ (adding a constant to
every logit of a context leaves the softmax unchanged). Sorting both sides by
rank,
\[
    z_{(r)}(h) = \log q_{(r)}(h) + c(h),
\]
where $q_{(r)}(h)$ is the $r$-th largest value of $q(\cdot\mid h)$. Thus, on
the vocabulary head, the empirical law
$z_{(r)}\approx\mu-\alpha\log r$ holds exactly when the conditional next-token
distribution is approximately Zipfian in rank,
$q_{(r)}(h)\propto r^{-\alpha}$. This is a change of variables rather than a
new assumption: log-linear logits and an approximately Zipfian
$q(\cdot\mid h)$ express the same claim, provided the model is near its
cross-entropy optimum.

\paragraph{What this equivalence does, and does not, explain.}
This identity does not by itself explain why $q(\cdot\mid h)$ should be close
to Zipfian; it instead transfers the same regularity from logits to
probabilities, where related structure has been conjectured and used
previously, most directly in Mirostat's online Zipf-exponent estimate
~\citep{basu2021mirostat}. Two points distinguish this from a purely
definitional restatement of what we measure. First, the identity above is
exact only at the unconstrained population optimum, while a finite model
trained on finite data with a fixed architecture and a low-rank output
embedding is not guaranteed to attain that optimum. The observation that
trained models realize this structure with $R^2>0.99$, with $\alpha$ confined
to a narrow band across four model families and an order of magnitude of
scale (Section~\ref{sec:results}), is therefore an empirical property of
trained models rather than a consequence of the training objective alone.
It is this tightness and stability, rather than the identity itself, that
requires explanation. Second, we measure the structure directly in logit
space, before any token is sampled and independently of any decoding rule,
which differs from the post-hoc statistics of generated text studied in
prior work on statistical laws of language
(Appendix~\ref{appendix:related-work}).

\paragraph{An equivalent distance-based parameterization.}
We provide an equivalent representation not necessarily a real-world representation, but in terms of a
context-dependent mismatch, or distance, $d_h(i)$ between the context and a
candidate token, rather than directly in terms of $q(\cdot\mid h)$. The two
descriptions are equivalent by construction: for any $\gamma_h>0$, writing
\[
    q(i\mid h)\propto\exp(-\gamma_h d_h(i))
\]
simply \emph{defines}
\[
    d_h(i):=-\gamma_h^{-1}\log q(i\mid h)+\mathrm{const}.
\]
Under this definition, the exponential-volume condition
\[
    G_h(t):=\#\{i\in\mathcal V:d_h(i)\le t\}
    \;\approx\; e^{\lambda_h t}
\]
is exactly the statement that $q(\cdot\mid h)$ is approximately Zipfian in
rank. We consider the resulting rank-distance relation as an alternative
interpretation of the observed log-linear law and to express temperature
rescaling and tail counts (Sections~\ref{subsec:tail-count} and
\ref{subsec:zipf}) in distance language as well. It is not intended as an
additional result beyond the equivalence above. Neither the exponential-volume
condition nor the linear-penalty form $z\approx\beta-\gamma d$ below is
independently verified in our data.

\begin{theorem}[rank-distance form of the log-linear law]
\label{thm:semantic-volume}
Fix a context $h$, and let $D_h(r)$ denote the $r$-th smallest value of
$\{d_h(i)\}_{i\in\mathcal V}$. Suppose that for ranks $1\le r\le K$,
\[
    \left|
    D_h(r)-\frac{\log r-\nu_h}{\lambda_h}
    \right|
    \le
    \frac{\eta_h}{\lambda_h},
    \qquad \lambda_h>0,
\]
and that the logits are uniformly close to a linear penalty in distance:
\[
\begin{aligned}
    \max_{i\in\mathcal V}
    \left|z_i(h)-\bigl(\beta_h-\gamma_h d_h(i)\bigr)\right|
    &\le \varepsilon_h, \\
    \gamma_h&>0.
\end{aligned}
\]
Then, for $r=1,\ldots,K$,
\[
    \left|
    z_{(r)}(h)-\bigl(\mu_h-\alpha_h\log r\bigr)
    \right|
    \le
    \varepsilon_h+\alpha_h\eta_h,
\]
where
\[
    \alpha_h=\frac{\gamma_h}{\lambda_h},
    \qquad
    \mu_h=\beta_h+\alpha_h\nu_h.
\]
\end{theorem}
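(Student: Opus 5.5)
The plan is to pass through an auxiliary "ideal" logit vector built from the distances, and to split the error into two pieces. The first piece is the mismatch between the true logits and the linear penalty. The second is the mismatch between the sorted distances and the log-rank law. Define $y_i := \beta_h - \gamma_h d_h(i)$ for $i\in\mathcal V$, and write $y_{(1)}\ge\cdots\ge y_{(V)}$ for its descending order statistics. For each $r\le K$ the triangle inequality gives
\[
\bigl|z_{(r)}-(\mu_h-\alpha_h\log r)\bigr|
\le \bigl|z_{(r)}-y_{(r)}\bigr| + \bigl|y_{(r)}-(\mu_h-\alpha_h\log r)\bigr|.
\]
I will bound the first term by $\varepsilon_h$ and the second by $\alpha_h\eta_h$.

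\textbf{Second term.} Since $\gamma_h>0$, the map $d\mapsto\beta_h-\gamma_h d$ is strictly decreasing. Hence sorting $y$ in descending order is the same as sorting $d_h$ in ascending order, and $y_{(r)}=\beta_h-\gamma_h D_h(r)$, with ties handled consistently because only values matter. Substituting and using $\alpha_h=\gamma_h/\lambda_h$ and $\mu_h=\beta_h+\alpha_h\nu_h$, we get
\[
y_{(r)}-(\mu_h-\alpha_h\log r) = -\gamma_h\Bigl(D_h(r)-\frac{\log r-\nu_h}{\lambda_h}\Bigr).
\]
The hypothesis on $D_h$ then bounds this in absolute value by $\gamma_h\eta_h/\lambda_h=\alpha_h\eta_h$. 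This step is pure algebra.

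\textbf{First term.} The first term is where the real content lies, though it is still elementary. I need the standard fact that order statistics are $1$-Lipschitz in the sup norm: if $\max_i|z_i-y_i|\le\varepsilon$, then $|z_{(r)}-y_{(r)}|\le\varepsilon$ for every $r$. The care needed is to avoid implicitly assuming that the sorting permutations of $z$ and $y$ coincide, since they need not. I will argue by counting.

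\emph{Lower bound.} Let $S$ be the set of the $r$ indices attaining $y_{(1)},\dots,y_{(r)}$. Every $i\in S$ satisfies $y_i\ge y_{(r)}$, so $z_i\ge y_i-\varepsilon\ge y_{(r)}-\varepsilon$. At least $r$ entries of $z$ are therefore $\ge y_{(r)}-\varepsilon$, which forces $z_{(r)}\ge y_{(r)}-\varepsilon$.

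\emph{Upper bound.} Exchange the roles of $z$ and $y$: the same counting argument gives $y_{(r)}\ge z_{(r)}-\varepsilon$.

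Together these give $|z_{(r)}-y_{(r)}|\le\varepsilon_h$. Note that the uniform closeness hypothesis is needed over all of $\mathcal V$, not just the head. This is exactly why the statement takes the max over $i\in\mathcal V$: a token outside the top $K$ of $d_h$ could otherwise enter the top $K$ of $z$.

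Combining the two bounds yields $\varepsilon_h+\alpha_h\eta_h$ for $r=1,\ldots,K$, as claimed.
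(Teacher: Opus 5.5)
Your proof is correct and follows the paper's argument: introduce the ideal score $y_i=\beta_h-\gamma_h d_h(i)$, use $\gamma_h>0$ to get $y_{(r)}=\beta_h-\gamma_h D_h(r)$ and hence the $\alpha_h\eta_h$ bound, then use the $1$-Lipschitz property of order statistics under the sup norm for the $\varepsilon_h$ term. The only difference is that you prove that Lipschitz property with the counting argument, whereas the paper simply asserts it.
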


\begin{proof}
Define the ideal score
\[
    y_i(h)=\beta_h-\gamma_h d_h(i).
\]
Because $\gamma_h>0$, the ordering of $y_i(h)$ is the reverse of the ordering
of $d_h(i)$. Therefore, the $r$-th largest ideal score is
\[
    y_{(r)}(h)=\beta_h-\gamma_h D_h(r).
\]
Using the assumed rank-distance bound,
\[
    \left|
    y_{(r)}(h)-
    \left(
    \beta_h+\frac{\gamma_h\nu_h}{\lambda_h}
    -
    \frac{\gamma_h}{\lambda_h}\log r
    \right)
    \right|
    \le
    \frac{\gamma_h}{\lambda_h}\eta_h.
\]
With $\alpha_h=\gamma_h/\lambda_h$ and
$\mu_h=\beta_h+\alpha_h\nu_h$, this becomes
\[
    \left|
    y_{(r)}(h)-\bigl(\mu_h-\alpha_h\log r\bigr)
    \right|
    \le
    \alpha_h\eta_h.
\]
It remains to compare the order statistics of $z_i(h)$ and $y_i(h).$
The uniform perturbation assumption gives
\[
    \max_{i\in\mathcal V}|z_i(h)-y_i(h)|\le\varepsilon_h.
\]
Order statistics are $1$-Lipschitz under the sup norm: if
$\max_i|a_i-b_i|\le\varepsilon$, then
$|a_{(r)}-b_{(r)}|\le\varepsilon$ for every rank $r$. Hence,
\[
    |z_{(r)}(h)-y_{(r)}(h)|\le\varepsilon_h.
\]
Combining the two inequalities yields
\[
    \left|
    z_{(r)}(h)-\bigl(\mu_h-\alpha_h\log r\bigr)
    \right|
    \le
    \varepsilon_h+\alpha_h\eta_h. \qedhere
\]
\end{proof}

The rank--distance bound in Theorem~\ref{thm:semantic-volume} is the
exponential-volume condition expressed in rank form. Define
\[
    G_h(t)=\#\{i\in\mathcal V:d_h(i)\le t\}.
\]
If, over the relevant head range,
\[
    \left|\log G_h(t)-\bigl(\nu_h+\lambda_h t\bigr)\right|\le\eta_h,
\]
then solving $G_h(t)\approx r$ gives
\[
    D_h(r)\approx\frac{\log r-\nu_h}{\lambda_h},
\]
which is the hypothesis of Theorem~\ref{thm:semantic-volume}. The next
paragraph establishes this implication directly. Under this parameterization,
the fitted slope decomposes as
\[
    \alpha_h
    =
    \frac{\gamma_h}{\lambda_h}
    =
    \frac{\textit{logit penalty per unit distance}}
    {\textit{distance-volume growth rate}},
\]
which is the same Zipf exponent $\alpha$ already defined in the main text,
expressed in terms of a penalty and a growth rate rather than as two
separately measurable quantities. Intuitively, $d_h(i)$ can be viewed as how well a candidate token fits what the
preceding context calls for. Within this parameterization, a larger $\alpha_h$
corresponds to a more selective context, concentrating probability on a smaller
set of plausible continuations and naturally yielding a sharper candidate set
under nucleus truncation.

\paragraph{Checking the relabeling.}
Let $G_h(t)=\#\{i\in\mathcal V:d_h(i)\le t\}$ and
$D_h(r)=\inf\{t:G_h(t)\ge r\}$. Assume that over the relevant distance range,
\[
    \left|\log G_h(t)-\bigl(\nu_h+\lambda_h t\bigr)\right|
    \le\eta_h,
    \qquad \lambda_h>0.
\]
For
\[
    t_+(r)=\frac{\log r-\nu_h+\eta_h}{\lambda_h},
\]
we have
\[
    \log G_h(t_+(r))
    \ge
    \nu_h+\lambda_h t_+(r)-\eta_h
    =
    \log r,
\]
so $G_h(t_+(r))\ge r$ and therefore $D_h(r)\le t_+(r)$. Similarly, for any
\[
    t<t_-(r)=\frac{\log r-\nu_h-\eta_h}{\lambda_h},
\]
we have
\[
    \log G_h(t)
    \le
    \nu_h+\lambda_h t+\eta_h
    <
    \log r,
\]
so $G_h(t)<r$ and hence $D_h(r)\ge t$. Letting
$t\uparrow t_-(r)$ gives $D_h(r)\ge t_-(r)$. Combining the two bounds,
\[
    \left|
    D_h(r)-\frac{\log r-\nu_h}{\lambda_h}
    \right|
    \le
    \frac{\eta_h}{\lambda_h}.
\]
\subsection{Tail-count form of the rank law}
\label{subsec:tail-count}

The preceding section concerns the possible origin of the empirical law.
Here we characterize its consequences for a fixed logit vector, independently
of its origin. Define the empirical upper-tail count
\[
    N(x)=\#\{i\in\mathcal V:z_i\ge x\}.
\]
For distinct top-$K$ logits, $N(z_{(r)})=r$.

\begin{proposition}[tail-count characterization]
\label{prop:tail-count}
Assume the top-$K$ logits are distinct and
\[
\begin{aligned}
    z_{(r)}&=\mu-\alpha\log r+\varepsilon_r, \\
    \alpha&>0,\qquad r=1,\ldots,K.
\end{aligned}
\]
Then, at $x=z_{(r)}$,
\[
    \log N(x)
    =
    \frac{\mu-x}{\alpha}
    +
    \frac{\varepsilon_r}{\alpha}.
\]
Conversely, if for all $r\le K$,
\[
    \left|
    \log N(z_{(r)})-\frac{\mu-z_{(r)}}{\alpha}
    \right|
    \le \eta,
\]
then
\[
    \left|z_{(r)}-\bigl(\mu-\alpha\log r\bigr)\right|
    \le \alpha\eta.
\]
\end{proposition}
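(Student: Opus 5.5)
The plan is to reduce both directions to the single identity $N(z_{(r)})=r$, which holds for $r\le K$ when the top-$K$ logits are distinct. After that substitution, each direction is a one-line rearrangement of an equality or inequality, using $\alpha>0$.

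\emph{Step 1: the tail-count identity.} Distinctness gives strict inequalities $z_{(1)}>z_{(2)}>\cdots>z_{(K)}$. So for $r\le K$ the tokens with $z_i\ge z_{(r)}$ are exactly those occupying ranks $1,\dots,r$. Hence $N(z_{(r)})=r$, as already noted before the statement.

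One caveat concerns $r=K$: a tie between $z_{(K)}$ and $z_{(K+1)}$ would inflate the count. I read ``top-$K$ logits are distinct'' as also covering this, meaning $z_{(K)}>z_{(K+1)}$. I would state this reading explicitly, since it is the only delicate point.

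\emph{Step 2: forward direction.} Substitute $\log N(z_{(r)})=\log r$ into the hypothesis $z_{(r)}=\mu-\alpha\log r+\varepsilon_r$ and solve for $\log r$:
\[
\log r=\frac{\mu-z_{(r)}}{\alpha}+\frac{\varepsilon_r}{\alpha}.
\]
This is exactly the claimed formula evaluated at $x=z_{(r)}$.

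\emph{Step 3: converse.} Again replace $\log N(z_{(r)})$ by $\log r$, so the hypothesis reads
\[
\left|\log r-\frac{\mu-z_{(r)}}{\alpha}\right|\le\eta .
\]
Multiplying through by $\alpha>0$ gives $|\alpha\log r-\mu+z_{(r)}|\le\alpha\eta$, which is the claimed bound $|z_{(r)}-(\mu-\alpha\log r)|\le\alpha\eta$.

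\emph{Main obstacle.} The argument is essentially algebraic. The only place requiring care is Step 1, namely justifying $N(z_{(r)})=r$ at the boundary rank $K$ and making the distinctness assumption precise there.
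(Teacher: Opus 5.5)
Your proposal is correct and follows the paper's proof: both reduce everything to $N(z_{(r)})=r$, then rearrange for the forward direction and multiply by $\alpha>0$ for the converse. Your remark that distinctness must also exclude a tie $z_{(K)}=z_{(K+1)}$ at the boundary rank is a legitimate sharpening of a point the paper covers only with the phrase ``for distinct top-$K$ logits.''
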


\begin{proof}
For distinct top-$K$ logits, $N(z_{(r)})=r$. If
$z_{(r)}=\mu-\alpha\log r+\varepsilon_r$, then
\[
    \log r
    =
    \frac{\mu-z_{(r)}}{\alpha}
    +
    \frac{\varepsilon_r}{\alpha}.
\]
Substituting $\log N(z_{(r)})=\log r$ gives
\[
    \log N(z_{(r)})
    =
    \frac{\mu-z_{(r)}}{\alpha}
    +
    \frac{\varepsilon_r}{\alpha}.
\]
Conversely, if
$\left|\log N(z_{(r)})-\frac{\mu-z_{(r)}}{\alpha}\right|\le\eta$, then using
$N(z_{(r)})=r$ gives
\[
    \left|
    \log r-\frac{\mu-z_{(r)}}{\alpha}
    \right|\le\eta.
\]
Multiplying by $\alpha$ and rearranging gives
\[
    \left|z_{(r)}-\bigl(\mu-\alpha\log r\bigr)\right|
    \le\alpha\eta. \qedhere
\]
\end{proof}

Thus, on the vocabulary head, the log-linear law is equivalent to an
approximately exponential empirical upper tail:
\[
    N(x)\approx \exp\!\left(\frac{\mu-x}{\alpha}\right).
\]
This statement is deterministic for each context and does not require treating
the logits as independent random variables.

\subsection{Softmax consequence: a conditional Zipf law}
\label{subsec:zipf}

The log-linear law has a direct consequence for the model's predictive
distribution. Let
\[
    p_{(r)}=
    \frac{\exp z_{(r)}}{\sum_{s=1}^{V}\exp z_{(s)}}
\]
be the full softmax probability of the token at rank $r$. Since the empirical
law is fitted over the top $K$ logits, define the top-$K$ probability mass
\[
    M_K=\sum_{s=1}^{K}p_{(s)}
\]
and the conditional distribution within the vocabulary head
\[
    q_r^{(K)}=
    \frac{p_{(r)}}{M_K}
    =
    \frac{\exp z_{(r)}}{\sum_{s=1}^{K}\exp z_{(s)}}.
\]

\begin{proposition}[conditional Zipf law]
\label{prop:zipf}
Suppose
\[
\begin{aligned}
    z_{(r)}&=\mu-\alpha\log r+\varepsilon_r, \\
    |\varepsilon_r|&\le\delta,\qquad r=1,\ldots,K.
\end{aligned}
\]
Let
\[
    H_{K,\alpha}=\sum_{s=1}^{K}s^{-\alpha}.
\]
Then
\[
    e^{-2\delta}\frac{r^{-\alpha}}{H_{K,\alpha}}
    \le
    q_r^{(K)}
    \le
    e^{2\delta}\frac{r^{-\alpha}}{H_{K,\alpha}}.
\]
Equivalently,
\[
    M_K e^{-2\delta}\frac{r^{-\alpha}}{H_{K,\alpha}}
    \le
    p_{(r)}
    \le
    M_K e^{2\delta}\frac{r^{-\alpha}}{H_{K,\alpha}}.
\]
\end{proposition}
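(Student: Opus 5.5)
The plan is to compute $q_r^{(K)}$ directly from the assumed form of the ordered logits and control the numerator and the denominator separately. The key observation is that the intercept $\mu$ cancels in the ratio.

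\textbf{Numerator.} Substituting $z_{(r)}=\mu-\alpha\log r+\varepsilon_r$ gives
\[
\exp z_{(r)} = e^{\mu}\, r^{-\alpha}\, e^{\varepsilon_r}.
\]
Since $|\varepsilon_r|\le\delta$, this is sandwiched between $e^{\mu}r^{-\alpha}e^{-\delta}$ and $e^{\mu}r^{-\alpha}e^{\delta}$.

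\textbf{Denominator.} Apply the same bound termwise to $\sum_{s=1}^{K}\exp z_{(s)}$. This yields
\[
e^{\mu}e^{-\delta}H_{K,\alpha}\le \sum_{s=1}^{K}\exp z_{(s)}\le e^{\mu}e^{\delta}H_{K,\alpha},
\]
using only positivity of every term and the definition $H_{K,\alpha}=\sum_{s\le K}s^{-\alpha}$.

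\textbf{Combining.} Dividing, the factor $e^{\mu}$ cancels. For the upper bound on $q_r^{(K)}$, use the upper bound on the numerator and the lower bound on the denominator; this gives the factor $e^{\delta}/e^{-\delta}=e^{2\delta}$. For the lower bound, use the opposite pairing, which gives $e^{-2\delta}$. This is exactly the first display.

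\textbf{Equivalent form.} The second display follows by multiplying through by $M_K>0$, using the identity $p_{(r)}=M_K\,q_r^{(K)}$. That identity holds because the full softmax normalizer cancels in the ratio $p_{(r)}/M_K$, as in the definition of $q_r^{(K)}$.

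There is no real obstacle. The only points needing care are the pairing of the inequality directions, so that the error factor comes out as $e^{\pm2\delta}$, and the fact that $q^{(K)}$ depends only on the top-$K$ logits, so the tail ranks $r>K$, where no fit is assumed, never enter. Note also that $\alpha>0$ is not even needed for the bound itself.
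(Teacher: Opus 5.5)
Your proposal is correct and follows the paper's proof essentially step for step. Both arguments sandwich $\exp z_{(r)}$ between $e^{\mu}e^{\pm\delta}r^{-\alpha}$, sum to bound the head normalizer by $e^{\mu}e^{\pm\delta}H_{K,\alpha}$, divide with opposite pairings to get $e^{\pm 2\delta}$, and then multiply by $M_K$ using $p_{(r)}=M_K q_r^{(K)}$ (your remark that $\alpha>0$ is not needed is also correct).
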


\begin{proof}
For each $r\le K$,
\[
    \exp z_{(r)}=e^\mu r^{-\alpha}e^{\varepsilon_r}.
\]
Since $|\varepsilon_r|\le\delta$,
\[
    e^\mu e^{-\delta}r^{-\alpha}
    \le
    \exp z_{(r)}
    \le
    e^\mu e^\delta r^{-\alpha}.
\]
Summing over $s=1,\ldots,K$ gives
\[
    e^\mu e^{-\delta}H_{K,\alpha}
    \le
    \sum_{s=1}^{K}\exp z_{(s)}
    \le
    e^\mu e^\delta H_{K,\alpha}.
\]
Dividing the pointwise bounds by the denominator bounds yields
\[
    e^{-2\delta}\frac{r^{-\alpha}}{H_{K,\alpha}}
    \le
    q_r^{(K)}
    \le
    e^{2\delta}\frac{r^{-\alpha}}{H_{K,\alpha}}.
\]
The bound for $p_{(r)}$ follows from
$p_{(r)}=M_Kq_r^{(K)}$. \qedhere
\end{proof}

Therefore, the log-linear logit law implies an approximate Zipf distribution
for the model's context-dependent next-token probabilities, restricted to the
vocabulary head. This differs from corpus-level Zipf laws: here the
distribution is produced by the model before sampling and varies with the
context.

Temperature scaling follows directly. At temperature $T>0$,
\[
    \frac{z_{(r)}}{T}
    =
    \frac{\mu}{T}
    -
    \frac{\alpha}{T}\log r
    +
    \frac{\varepsilon_r}{T}.
\]
Thus, in the exact case, temperature rescales the exponent as
\[
    \alpha_T=\frac{\alpha}{T}.
\]
Lower temperatures sharpen the head distribution, while higher temperatures
flatten it.

\subsection{Normalized spectrum collapse}
\label{subsec:collapse-appendix}

Figure~\ref{fig:logit_geometry}(b) shows this collapse, and
Section~\ref{sec:results} describes it empirically; here we state it
formally. The collapse is a direct algebraic consequence of $z$-scoring an
affine function of $\log r$: any two exactly log-linear spectra collapse onto
the same normalized curve regardless of their $\mu$ and $\alpha$. Thus, the
collapse itself provides no evidence for cross-context universality beyond the
quality of the affine fit in each context. The empirical claim is instead
that individual contexts are well described by that affine form with a stable
slope (Section~\ref{sec:results}). The stability result below quantifies how
far the normalized collapse can deviate from its idealized form when the
log-linear fit is approximate.

\begin{proposition}[normalized spectrum collapse]
\label{prop:collapse}
If $z_{(r)} = \mu - \alpha \log r$ for $r = 1,\ldots,K$, then the standardized
spectrum $\tilde z_{(r)} = (z_{(r)} - \bar z_K)/\sigma_{z,K}$ satisfies
\[
    \tilde z_{(r)} = \frac{\bar\ell_K - \log r}{s_{\ell,K}},
\]
where $\ell_r = \log r$, and $\bar\ell_K$, $s_{\ell,K}$ are the mean and
standard deviation of $\{\ell_r\}_{r=1}^K$, independent of $\mu$ and
$\alpha$.
\end{proposition}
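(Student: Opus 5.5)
The plan is direct substitution into the definitions of the sample mean and sample standard deviation over the head $r=1,\ldots,K$. I would first write $z_{(r)}=\mu-\alpha\ell_r$ with $\ell_r=\log r$. Averaging over $r=1,\ldots,K$ gives $\bar z_K=\mu-\alpha\bar\ell_K$, so the centered spectrum is
\[
    z_{(r)}-\bar z_K=-\alpha\,(\ell_r-\bar\ell_K)=\alpha\,(\bar\ell_K-\ell_r).
\]
The intercept $\mu$ therefore disappears at the centering step.

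Next I would compute the spread. Using the same normalization convention for both standard deviations (population or sample; the ratio does not depend on the choice as long as it is consistent), the centered identity gives
\[
    \sigma_{z,K}^2=\frac{1}{K}\sum_{r=1}^K \alpha^2(\ell_r-\bar\ell_K)^2=\alpha^2 s_{\ell,K}^2 .
\]
Since $\alpha>0$, taking the nonnegative square root yields $\sigma_{z,K}=\alpha\,s_{\ell,K}$ rather than $|\alpha|\,s_{\ell,K}$ with a sign ambiguity. Dividing the centered spectrum by this quantity cancels $\alpha$ and gives $\tilde z_{(r)}=(\bar\ell_K-\log r)/s_{\ell,K}$. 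The right-hand side depends only on $K$ and $r$, so it is independent of $\mu$ and $\alpha$, as claimed.

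The only genuine obstacle is well-definedness: the normalization requires $s_{\ell,K}>0$. This holds whenever $K\ge2$, because $\ell_1=0<\ell_2=\log 2$, so the values $\ell_r$ are not all equal. I would state this condition explicitly, since for $K=1$ both standard deviations vanish and the normalization is undefined. The positivity of $\alpha$ in the log-linear law is used only to fix the sign in the square-root step. If $\alpha<0$ were allowed, the normalized curve would be reflected, but the paper's convention $\alpha>0$ excludes this case.
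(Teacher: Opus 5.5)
Your proposal is correct and follows the paper's proof step for step: center to eliminate $\mu$, compute $\sigma_{z,K}^2=\alpha^2 s_{\ell,K}^2$, use $\alpha>0$ to take the positive square root, and divide. Your remark that $K\ge 2$ is needed so that $s_{\ell,K}>0$ is a small well-definedness point that the paper leaves implicit.
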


\begin{proof}
Let $\ell_r=\log r$. If
\[
    z_{(r)}=\mu-\alpha\ell_r,
\]
then
\[
    \bar z_K=\mu-\alpha\bar\ell_K.
\]
Therefore
\[
    z_{(r)}-\bar z_K=\alpha(\bar\ell_K-\ell_r).
\]
Also,
\[
    \sigma_{z,K}^2
    =
    \frac1K\sum_{r=1}^{K}\alpha^2(\ell_r-\bar\ell_K)^2
    =
    \alpha^2s_{\ell,K}^2.
\]
Since $\alpha>0$, $\sigma_{z,K}=\alpha s_{\ell,K}$. Hence
\[
    \widetilde z_{(r)}
    =
    \frac{z_{(r)}-\bar z_K}{\sigma_{z,K}}
    =
    \frac{\bar\ell_K-\log r}{s_{\ell,K}}. \qedhere
\]
\end{proof}

\paragraph{Stability of the normalized collapse}
In real contexts, the log-linear law is only approximate, so we consider the
effect of residual fit error on the normalized spectrum. Let
\[
    z_{(r)}=\mu-\alpha\ell_r+\varepsilon_r,
\]
and define the empirical norm
\[
    \|x\|_K=
    \left(\frac1K\sum_{r=1}^{K}x_r^2\right)^{1/2}.
\]
Let
\[
    a_r=\bar\ell_K-\ell_r,
    \qquad
    \phi_K(r)=\frac{a_r}{\|a\|_K},
\]
so $\|a\|_K=s_{\ell,K}$. Centering removes $\mu$ and the mean residual. With
\[
    e_r=\varepsilon_r-\frac1K\sum_{s=1}^{K}\varepsilon_s,
\]
the centered spectrum is
\[
    z_{(r)}-\bar z_K=\alpha a_r+e_r.
\]
If
\[
    \rho=\frac{\|e\|_K}{\alpha s_{\ell,K}}<1,
\]
then by the reverse triangle inequality,
\[
    \|\alpha a+e\|_K\ge \alpha s_{\ell,K}(1-\rho).
\]
Using
\[
    \left\|\frac{u+v}{\|u+v\|}
    -\frac{u}{\|u\|}\right\|
    \le
    \frac{2\|v\|}{\|u+v\|}
\]
with $u=\alpha a$ and $v=e$ gives
\[
    \left\|\widetilde z-\phi_K\right\|_K
    \le
    \frac{2\rho}{1-\rho}.
\]
If $|\varepsilon_r|\le\delta$ for all $r\le K$, then
$\|e\|_K\le\delta$, so whenever $\delta<\alpha s_{\ell,K}$,
\[
    \left\|\widetilde z-\phi_K\right\|_K
    \le
    \frac{2\delta}{\alpha s_{\ell,K}-\delta}.
\]

\end{document}